\documentclass{article}[9pt] 



%

%
\usepackage{spconf}
\usepackage{cite}

%
\usepackage[dvips]{graphicx}
%
%

\usepackage{epsfig,epsf,epstopdf,graphicx}
\usepackage[cmex10]{amsmath}
\usepackage{amssymb}
\usepackage{bbm}
\usepackage{amsthm }
\usepackage{nicefrac}
\usepackage{hyperref}
\usepackage{xcolor}
\hypersetup{colorlinks=true}
\usepackage{tabularx}

\newtheorem{lemma}{Lemma}

\theoremstyle{theorem}

\theoremstyle{definition}

\theoremstyle{plain}

\theoremstyle{plain}

%

\newcommand{\nuv}{{\mbox{\boldmath $\nu$}}}
\newcommand{\xiv}{{\mbox{\boldmath $\xi$}}}

\newcommand{\bv}{{\bf{b}}}

\newcommand{\qv}{{\bf{q}}}
\newcommand{\rv}{{\bf{r}}}

\newcommand{\wv}{{\bf{w}}}
\newcommand{\xv}{{\bf{x}}}

\newcommand{\zerov}{{\bf{0}}}
\newcommand{\onev}{{\bf{1}}}

\newcommand{\Db}{\mathbf{D}}
\newcommand{\Eb}{\mathbf{E}}
\newcommand{\Fb}{\mathbf{F}}
\newcommand{\Gb}{\mathbf{G}}
\newcommand{\Hb}{\mathbf{H}}
\newcommand{\Ib}{\mathbf{I}}
\newcommand{\Jb}{\mathbf{J}}
\newcommand{\Kb}{\mathbf{K}}
\newcommand{\Lb}{\mathbf{L}}
\newcommand{\Mb}{\mathbf{M}}

\newcommand{\Sb}{\mathbf{S}}

\newcommand{\Xb}{\mathbf{X}}
\newcommand{\Yb}{\mathbf{Y}}

\newcommand{\Real}{\mathbb{R}}

\newcommand{\Ec}{{\cal{E}}}
\newcommand{\Dc}{\Delta}
\newcommand{\Gc}{{\cal{G}}}

\newcommand{\Nc}{{\cal{N}}}

\newcommand{\Oc}{{\cal{O}}}
\newcommand{\Lc}{{\cal L}}
\newcommand{\Pc}{{\cal P}}
\newcommand{\Vc}{{\cal V}}

\newcommand{\norm}{\Vert}
\newcommand\normop[1]{\left\lVert#1\right\rVert}

\newcommand{\maxx}{\mathop{ \mathrm{max}}}

\newcommand{\argminn}{\mathop{ \mathrm{argmin}}}

\newcommand{\tr}{\mathop{ \mathrm{Tr}}}
\newcommand\asto[1]{#1^\ast}
\newcommand{\vecc}{\mathrm{vec}}

\newcommand{\Diag}{\mathrm{Diag}}
\newcommand{\cte}{\mathrm{const}}


%
\usepackage{algorithm}
\usepackage{multirow}
\usepackage{algcompatible}
\usepackage[bottom]{footmisc}
\usepackage{multicol}
\usepackage{balance}

\usepackage{tikz}
\usetikzlibrary{calc}

%



\usepackage{subcaption}
\captionsetup{font=footnotesize}
\captionsetup[sub]{font=scriptsize}
%

%
\usepackage{fixltx2e}
\usepackage{afterpage}
\usepackage{float}
\usepackage{stfloats}

\usepackage[space]{grffile}



%
\title{Joint Signal Recovery and  Graph Learning from Incomplete Time-Series
}

\name{Amirhossein~Javaheri $^{\star \dagger}$ \qquad Arash~Amini $^{\star}$ \qquad Farokh~Marvasti $^{\star}$ \qquad Daniel~P.~Palomar $^{\dagger}$\vspace*{-6pt}}
  
\address{$^{\star}$ Sharif University of Technology \\
  $^{\dagger}$ Hong Kong University of Science and Technology}

\begin{document}

\maketitle
\begin{abstract}
  Learning a graph from data is the key to  taking advantage of  graph signal processing tools. Most of the conventional algorithms for graph learning  require  complete data statistics, which might not be available in some scenarios. In this work, we aim to learn a graph from incomplete time-series observations. From another viewpoint, we consider the problem of semi-blind  recovery of time-varying graph signals where the underlying graph model is unknown.
We propose an algorithm based on the method of block successive upperbound minimization (BSUM), for simultaneous inference of the signal and the graph from incomplete data.
 Simulation results on synthetic and real time-series demonstrate the performance of the proposed method for  graph learning and signal recovery. 
\end{abstract}

\begin{keywords}
Graph signal, graph learning, incomplete data, missing sample recovery, time-series.
\end{keywords}

%

\section{Introduction}
\label{sec:Intro}

Graph signal processing is an emerging field of research
with numerous applications in social networks \cite{campbell_social_2013}, 
neural networks \cite{wu_comprehensive_2021},  communications \cite{tamura_applications_1970}, 
and even financial markets \cite{cardoso_learning_2020}. 

Finding a graphical model for efficient signal representation  is a prerequisite  for 
graph signal processing.
An undirected graph models bilateral  correlations (e.g., factor graphs \cite{loeliger_introduction_2004}), whereas for unilateral dependencies, a directed graph is utilized (e.g., Bayesian networks  \cite{lalitha_decentralized_2019}). 
There are  numerous approaches to learning  the topology of the graph that best represents  the data, including stochastic approaches via a Gaussian Markov random field (GMRF) model \cite{friedman_sparse_2008, egilmez_graph_2017, zhao_optimization_2019} or  deterministic approaches incorporating measures of smoothness or stationarity for signal representation  \cite{kalofolias_how_2016}.
Another category of methods {\color{black}aims} to recover a signal  from corrupted (noisy or incomplete) measurements using a known graph model. 
Many of these algorithms exploit a graph-induced fidelity criterion, e.g., spatio-temporal smoothness \cite{mao_spatio-temporal_2018}   or  total variation \cite{chen_signal_2015, berger_graph_2017}
to help recover the graph  signal.

Graph learning algorithms rely on the integrity (completeness) of the data, whereas for   graph signal recovery methods,  the graph model of the data needs to be given a priori. 
In practical applications, however, the data may be incomplete,  or the graphical model may be unavailable.  
In this work, we study the problem of graph learning from incomplete  data or semi-blind graph signal recovery, in which the underlying graph structure is unknown.
{\color{black}Our method,  applied to both i.i.d. and time-dependent data, is shown to improve the reconstruction quality of missing entries in the signal by joint estimation of the signal and graph}.

The organization of the paper is as follows. In Section \ref{sec:problem}, we  formulate the problem and in Section \ref{sec:proposed}, we propose an iterative solution method. The simulation results are provided in Section \ref{sec: Simulation}.
%
%
For the notations, we reserve bold lower-case letters for  vectors (e.g., $\xv$) and bold   upper-case letters for  matrices (e.g., $\Xb$). The Kronecker  and the Hadamard 
products are 
respectively  denoted with $\otimes$ and $\odot$. In addition, $^{\circ }$ and $\oslash$, are respectively used for  element-wise power and division.

\subsection{Problem Statement}
\label{sec:problem}
An undirected graph with $n$ vertices can be represented by $\Gc = \{\Vc, \Ec, \wv \} $, with $ \Vc =\{1,…,n\}$ being the set of vertices, $\Ec \subseteq \{ \{i, j\} \vert\, i, j \in \Vc \} $ the set of edges, and  $\wv\in\Real ^ {n (n-1) / 2} $ the edge weights.  
{\color{black}
Let $\Xb^\ast = [\xv^\ast_1,\hdots,\xv^\ast_T]\in \Real^{n\times T}$ denote the matrix of the original signal for $T$ time-stamps.
Assume we have   observations with missing entries as $
 \Yb=\Mb \odot \Xb^\ast
 $, 
 where
   $\Mb$ denotes the binary sampling mask matrix.
Now, given $\Yb$ and $\Mb$, the problem considered in this paper is   to estimate  the original  signal $\Xb^\ast$ and the weights $\wv^\ast$ of an undirected graph model that  encode how similarly the signal elements vary with time (assuming spatial smoothness for the temporal variations of the signal).}

\section{Proposed Method}
\label{sec:proposed}
Assume the graph is connected. 
Using the graph learning framework in \cite{egilmez_graph_2017, zhao_optimization_2019} 
{\color{black} and assuming a Laplacian GMRF model for the temporal difference of the signal inspired by the notion of spatio-temporal smoothness \cite{mao_spatio-temporal_2018},}
we propose to solve the following problem for joint estimation of $\Xb^\ast$ and $\wv^\ast$
\begin{align}
\label{eq:MAP_X_A_w}
\Xb^\ast,  \wv^\ast =& \argminn_{\Xb, \wv\geq \zerov} f(\Xb,  \wv) \\ 
f(\Xb,  \wv)\triangleq & \normop{\Yb-\Mb \odot \Xb}_F^2 + \alpha \tr \left(\Lc(\wv)\Dc(\Xb) \Dc(\Xb)^\top\right)  \nonumber \\ 
&-\beta  \log \det (\Lc(\wv) + \Jb)+  \gamma \normop{\wv}_1 \nonumber 
\end{align}
where $\Dc(\Xb) = \Xb - \Xb\Db = [\xv_1-\xv_0,\hdots,\xv_T-\xv_{T-1}]$ represents the first-order temporal differences of the signal, and $\Db\in \Real^{T\times T}$ is an upper triangular matrix with only $1$s as  the {\color{black}first} upper diagonal elements (assuming $\xv_0 = \zerov$). Also,   $\Jb = (1/n) \onev \onev^\top$, and $\Lc\!: \Real^{n(n-1)/2} \rightarrow \Real^{n\times n}$ denotes the Laplacian operator \cite{kumar_unified_2020} that maps $\wv$ 
to 
the Laplacian matrix. The adjoint of this operator is also  denoted with $\asto \Lc$. The first term in \eqref{eq:MAP_X_A_w} is in fact the similarity criterion, and the second term quantifies the spatio-temporal smoothness, which measures the smoothness of temporal variations in a graph signal. 
For temporally i.i.d. data, 
this simply leads to the conventional graph signal smoothness \cite{kalofolias_how_2016}, 
{\color{black}making our model applicable for both i.i.d. and time-dependent data.}
The third term, partially referred to as data-fidelity in \cite{egilmez_graph_2017},  appears in the maximum likelihood estimation of the Laplacian in a GMRF model.
The last term also acts as a {\color{black} sparsity-promoting} regularization function  on $\wv$.

To solve problem \eqref{eq:MAP_X_A_w}, 
we use the block successive upperbound minimization (BSUM) \cite{razaviyayn_unified_2013,sun_majorization-minimization_2017} method 
which is actually an extension of the block-coordinate-descent (BCD) 
\cite{tseng_convergence_2001}.
{\color{black} Here, we minimize an uppperbound or a majorizer  (see the definition in \cite{razaviyayn_unified_2013})  of the original cost with respect to a block variable in each iteration, 
  resulting in unique and closed-form solutions to the subproblems (with guaranteed convergence \cite{razaviyayn_unified_2013}).
These update steps are as follows:}

\subsection{$\Xb$-update step}

Let $\wv$ be fixed. Then $f(\Xb, \wv)$   in \eqref{eq:MAP_X_A_w}, would be only a function  of $\Xb$  as  
\begin{align}
\label{eq:X-sub-orig}
f_{\Xb}(\Xb) =&    \tr \left( (\Yb-\Mb \odot \Xb)(\Yb-\Mb \odot \Xb)^\top\right)+ \nonumber \\
 &\alpha \tr \left(\Lc(\wv) \Dc(\Xb)\Dc(\Xb)^\top\right) + \cte.
\end{align}
Using vectorial representation,
one obtains
\begin{align}
f_{\Xb}(\Xb) =&\,    \vecc(\Xb)^\top \Gb \vecc(\Xb) -2 \vecc(\Xb)^\top \bv + \cte \nonumber
\end{align}
where
$
\Gb =   \Diag(\vecc(\Mb)) + \alpha \Hb^\top (\Ib_{T} \otimes \Lc(\wv))\Hb
$, $\Hb  =\Ib_{nT} - \Db^\top\otimes \Ib_n$,
 and $\bv = \vecc(\Yb)$.

\begin{lemma}\label{lemma:X_majorization}
Assume $\Xb_0$ to be constant. The function
\begin{align}
f_{\Xb}^S(\Xb; \!\Xb_0) = f_{\Xb}(\Xb) + \vecc(\Xb - \!\Xb_0)^{\!\top} \!(\theta \Ib - \Gb) \vecc(\Xb - \!\Xb_0)  \nonumber
\end{align}
is a strictly convex majorizer  for $f_{\Xb}(\Xb)$ if  $\theta > 1  +  4\alpha\normop{ \Lc(\wv)}$. 
\end{lemma}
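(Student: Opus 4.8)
The plan is to reduce the whole statement to a single spectral-norm bound on the matrix $\Gb$. By construction, $f_{\Xb}^S(\Xb;\Xb_0)-f_{\Xb}(\Xb)=\vecc(\Xb-\Xb_0)^\top(\theta\Ib-\Gb)\vecc(\Xb-\Xb_0)$, so the anchor condition $f_{\Xb}^S(\Xb_0;\Xb_0)=f_{\Xb}(\Xb_0)$ is immediate, and the upper-bound condition $f_{\Xb}^S(\Xb;\Xb_0)\ge f_{\Xb}(\Xb)$ for all $\Xb$ holds precisely when $\theta\Ib-\Gb\succeq\Ob$; in that case the nonnegative difference is minimized (at value $0$) at $\Xb_0$, so the gradients of $f_{\Xb}^S(\cdot;\Xb_0)$ and $f_{\Xb}$ also agree at $\Xb_0$, and all the properties of a BSUM majorizer are met. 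Expanding the correction term shows that the quadratic part of $f_{\Xb}^S(\cdot;\Xb_0)$ in $\vecc(\Xb)$ is simply $\theta\,\vecc(\Xb)^\top\vecc(\Xb)$, i.e. its Hessian is $2\theta\Ib$, which is positive definite whenever $\theta>0$; hence strict convexity is automatic once $\theta>1+4\alpha\normop{\Lc(\wv)}>0$. What remains is to show that this same lower bound on $\theta$ forces $\theta\Ib\succeq\Gb$, for which it suffices to prove $\lambda_{\max}(\Gb)\le 1+4\alpha\normop{\Lc(\wv)}$.

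For the spectral bound I would write $\Gb=\Diag(\vecc(\Mb))+\alpha\,\Hb^\top(\Ib_T\otimes\Lc(\wv))\Hb$ as a sum of two symmetric positive semidefinite matrices and apply the triangle inequality for the operator norm, bounding each summand in turn. The first satisfies $\normop{\Diag(\vecc(\Mb))}\le 1$ because $\Mb$ is a binary sampling mask. For the second, using that $\Lc(\wv)\succeq\Ob$ (a graph Laplacian is positive semidefinite) together with submultiplicativity, $\normop{\Hb^\top(\Ib_T\otimes\Lc(\wv))\Hb}\le\normop{\Hb}^2\normop{\Ib_T\otimes\Lc(\wv)}=\normop{\Hb}^2\normop{\Lc(\wv)}$, the last equality being the standard spectral-norm identity for Kronecker products.

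The remaining ingredient is the estimate $\normop{\Hb}\le 2$. Since $\Hb=\Ib_{nT}-\Db^\top\otimes\Ib_n$, the triangle inequality gives $\normop{\Hb}\le 1+\normop{\Db^\top\otimes\Ib_n}=1+\normop{\Db}$; and $\Db$ is the $T\times T$ first-superdiagonal shift matrix, so $\Db^\top\Db=\Diag(0,1,\hdots,1)$ and therefore $\normop{\Db}=1$, giving $\normop{\Hb}\le 2$ and $\normop{\Hb}^2\le 4$. Chaining these estimates yields $\lambda_{\max}(\Gb)\le 1+4\alpha\normop{\Lc(\wv)}$, so any $\theta>1+4\alpha\normop{\Lc(\wv)}$ makes $\theta\Ib-\Gb$ positive definite and $\theta$ positive, which is exactly what the two paragraphs above require. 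The only genuinely delicate point is the bound $\normop{\Hb}\le 2$: one must resist the temptation to diagonalize the non-symmetric $\Hb$ and instead argue through the triangle inequality plus the fact that the nilpotent shift $\Db$ has unit spectral norm. The positive semidefiniteness of $\Ib_T\otimes\Lc(\wv)$ — needed to extract $\Hb$ from $\Hb^\top(\cdot)\Hb$ as the factor $\normop{\Hb}^2$ — is the other place to take care, but both points are short once noticed.
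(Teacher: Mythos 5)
Your proof is correct and follows essentially the same route as the paper: bound $\lambda_{\max}(\Gb)$ via the triangle inequality, submultiplicativity, and the Kronecker-norm identity, then note that the quadratic form in $\theta\Ib-\Gb$ makes $f_{\Xb}^S$ a strictly convex majorizer. The only substantive addition is that you explicitly verify $\normop{\Hb}\le 2$ through $\normop{\Db}=1$ (a step the paper dismisses as ``easy to show''), and your Hessian argument for strict convexity is just a repackaging of the paper's completion-of-squares; note also that positive semidefiniteness of $\Ib_T\otimes\Lc(\wv)$ is not actually needed for the $\normop{\Hb}^2$ factor, since submultiplicativity alone suffices.
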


\begin{proof}
For $\theta >  \lambda_{\max}(\Gb) = \normop{\Gb}$, the matrix $ \theta \Ib - \Gb$ is positive definite. This implies that $f^S_{\Xb}(\Xb; \Xb_0)\geq f_{\Xb}(\Xb)$, and the equality is  only occurred at $\Xb = \Xb_0$. 
In addition, we can obtain an upper-bound for  $\normop{\Gb}$ as
\begin{align}
\normop{\Gb} &=\maxx_{\normop{\xv}=1} \normop{ \Diag(\vecc(\Mb)) \xv + \alpha \Hb^\top(\Ib  \otimes \Lc(\wv))\Hb  \xv} \nonumber \\
& \leq \maxx_{\normop{\xv}=1}  \normop{\Diag(\vecc(\Mb)) \xv} + \alpha \normop{\Hb^\top(\Ib  \otimes \Lc(\wv)) \Hb\xv}\nonumber \\
&\leq\maxx_{\normop{\xv}=1} \normop{ \xv} + \alpha\normop{\Hb}^2\normop{\Ib  \otimes \Lc(\wv)}\normop{ \xv} \nonumber\\
&= 1  + \alpha \normop{\Hb}^2\normop{ \Lc(\wv)} \nonumber
\end{align}
It is also easy to show that $\normop{\Hb} \leq 2 $.
Therefore, 
 $f_{\Xb}^S(\Xb; \Xb_0)$ would be a majorization function for $f_{\Xb}(\Xb)$ if  $\theta >  1  + 4\alpha \normop{ \Lc(\wv)} \geq   \normop{\Gb}$. We also obtain
\begin{align*}
f_{\Xb}^S(\Xb; \Xb_0) 
=&\,  \theta \normop{\vecc(\Xb-\Xb_0) +\frac{\Gb   \vecc(\Xb_0) -\bv}{\theta}}^2 + \cte
\end{align*}
Hence, $f_{\Xb}^S(\Xb; \Xb_0)$ is trivially  a strictly convex (quadratic) function  whose unique minimizer is given by
$\vecc(\Xb_0) -\frac{1}{\theta}\left( \Gb   \vecc(\Xb_0) -\bv \right)  = \vecc\left(\Xb_0 - \frac{1}{2\theta}\frac{\partial}{\partial \Xb} f_{\Xb}(\Xb)\vert_{\Xb_0}\right)$.
 \end{proof}

 Setting $\Xb_0 = \Xb^{(j)}$,  the $\Xb$-update step yields as follows
\begin{align}
\label{eq:X_update}
\Xb^{(j+1)} &= \argminn_{\Xb} f_{\Xb}^S(\Xb; \Xb^{(j)}) =  \Xb^{(j)} -\tfrac{1}{2\theta} \tfrac{\partial}{\partial \Xb} f_{\Xb}(\Xb^{(j)})\nonumber\\
\tfrac{\partial}{\partial \Xb}f_{\Xb}(\Xb)&=2\left( \alpha \Lc(\wv) \Dc(\Xb)(\Ib-\Db^\top)+  \Mb \odot \Xb - \Yb \right)
\end{align}

\subsection{$\wv$-update step}
Since $\wv\geq \zerov$, we may write $2\normop{\wv}_1 = \tr(\Lc(\wv)\Hb_{\text{off}})$, where $\Hb_{\text{off}} = \Ib - \onev\onev^\top$
Hence, assuming  $\Xb$ to be fixed, after division by $\beta$, the cost function in \eqref{eq:MAP_X_A_w} reduces to 
  \begin{align}
f_{\wv}(\wv) = \tr(\Lc ( \wv) \Kb) -  \log \det( \Lc(\wv)+\Jb )
 \end{align}
  where $\Kb =\frac{1}{\beta} \left(\alpha \Dc(\Xb)\Dc(\Xb)^\top+\gamma /2  \Hb_\text{off}\right) $.

\begin{lemma}
\label{prop:w_surr}
Let  $\qv = \Lc^\ast\left( (\Lc(\wv^{(j)}) + \Jb)^{-1}\right)$,  $\rv =  \Lc^\ast(\Kb)$,  and $\tau>0$  be a constant. Also assume $\wv_0 \geq 0$. Then, the following is  a strictly convex majorization function  for $f_{\wv}(\wv)$. 
\begin{align}
\label{eq:k}
f^S_{\wv}(\wv;\wv_0) = \tau\langle  \qv \odot \wv_0^{\circ 2}, \wv \oslash \wv_0 + (\wv_0 + 1/\tau)\oslash \wv - 2\rangle+\nonumber\\ \langle\wv,\rv \rangle+ \tr\left((\Lc(\wv_0) + \Jb)^{-1} \Jb\right)  - \log \det (\Lc(\wv_0)+\Jb) - n
\end{align}
\end{lemma}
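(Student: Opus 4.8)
The plan is to split off the linear term and then majorize the log‑determinant in two stages. First, by definition of the adjoint, $\langle\wv,\rv\rangle=\langle\wv,\Lc^\ast(\Kb)\rangle=\langle\Lc(\wv),\Kb\rangle=\tr(\Lc(\wv)\Kb)$, so the linear part of $f^S_\wv(\wv;\wv_0)$ already reproduces the first term of $f_\wv$ exactly. Hence it suffices to show that $g(\wv):=-\log\det(\Lc(\wv)+\Jb)$ is majorized at $\wv_0$ by $f^S_\wv(\wv;\wv_0)-\langle\wv,\rv\rangle$, i.e.\ by $\tau\langle\qv\odot\wv_0^{\circ 2},\,\wv\oslash\wv_0+(\wv_0+1/\tau)\oslash\wv-2\rangle$ plus the displayed constant. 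Throughout I take $\wv_0>0$ (weights supported on a connected spanning subgraph), write $e=\{i,j\}$ and $\ev_{ij}=\ev_i-\ev_j$, and use that $q_e=\ev_{ij}^\top(\Lc(\wv_0)+\Jb)^{-1}\ev_{ij}>0$.

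Stage~1 is a log‑linearization of $g$. The key ingredient is the weighted matrix‑tree theorem, $\det(\Lc(\wv)+\Jb)=n\sum_{T}\prod_{e\in T}w_e$ with the sum over spanning trees $T$; it shows that $\uv\mapsto\log\det(\Lc(e^{\uv})+\Jb)$ is a log‑sum‑exp of linear forms, hence convex, so $g(e^{\uv})$ is concave in $\uv$ and lies below its tangent plane at $\uv_0=\log\wv_0$. Translating back with the chain rule and $\partial_{w_e}\log\det(\Lc(\wv)+\Jb)=[\Lc^\ast((\Lc(\wv)+\Jb)^{-1})]_e$, this yields
\[
g(\wv)\ \le\ g(\wv_0)-\sum_e p_e\,\log\frac{w_e}{(w_0)_e},\qquad p_e:=(w_0)_e\,q_e\ \ge 0,
\]
with equality at $\wv_0$. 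I would also record two identities, both coming from $(\Lc(\wv_0)+\Jb)\onev=\onev$: namely $\tr((\Lc(\wv_0)+\Jb)^{-1}\Jb)=1$, and $\sum_e p_e=\langle\wv_0,\qv\rangle=\tr\!\big(\Lc(\wv_0)(\Lc(\wv_0)+\Jb)^{-1}\big)=n-1$ (equivalently, every spanning tree has $n-1$ edges); in particular $g(\wv_0)=\tr((\Lc(\wv_0)+\Jb)^{-1}\Jb)-\log\det(\Lc(\wv_0)+\Jb)-1$.

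Stage~2 converts each $-\log w_e$ into a linear‑plus‑reciprocal term. The scalar inequality I would use is: for $t,t_0,\tau>0$,
\[
-\log t\ \le\ \tau t+\frac{t_0(\tau t_0+1)}{t}-2\tau t_0-1-\log t_0 ,
\]
with equality iff $t=t_0$; to prove it let $\psi(t)$ be the difference of the two sides, observe $\psi(t_0)=0$, and compute $t^2\psi'(t)=(t-t_0)\big(\tau(t+t_0)+1\big)$, so that $\psi$ strictly decreases then strictly increases and attains its minimum $0$ at $t_0$. Applying this with $t=w_e$, $t_0=(w_0)_e$, multiplying by $p_e\ge0$, summing over $e$, and using $p_e(w_0)_e=q_e(w_0)_e^2$ together with $\sum_e p_e=n-1$, the right‑hand side of the Stage‑1 bound becomes exactly $\tau\langle\qv\odot\wv_0^{\circ 2},\,\wv\oslash\wv_0+(\wv_0+1/\tau)\oslash\wv-2\rangle+\tr((\Lc(\wv_0)+\Jb)^{-1}\Jb)-\log\det(\Lc(\wv_0)+\Jb)-n$. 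Adding $\langle\wv,\rv\rangle$ back gives $f_\wv(\wv)\le f^S_\wv(\wv;\wv_0)$, and $f^S_\wv(\wv_0;\wv_0)=f_\wv(\wv_0)$ since both stages are exact at $\wv_0$. Strict convexity is then immediate: $f^S_\wv(\,\cdot\,;\wv_0)$ is affine in $\wv$ plus $\sum_e c_e/w_e$ with $c_e=q_e(w_0)_e^2(\tau(w_0)_e+1)>0$, whose Hessian $\Diag(2c_e/w_e^3)$ is positive definite on the positive orthant.

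The step I expect to be the main obstacle is Stage~1: one must notice that $g(\wv)=-\log\det(\Lc(\wv)+\Jb)$ is convex in $\wv$, so its own supporting hyperplane yields only a useless lower bound, yet it becomes concave after the substitution $\wv=e^{\uv}$ — precisely the content of the weighted matrix‑tree theorem — and then one must propagate the constants carefully so that the two points of tangency and the offset $\tr((\Lc(\wv_0)+\Jb)^{-1}\Jb)-\log\det(\Lc(\wv_0)+\Jb)-n$ come out right. Everything after that (the scalar inequality and the convexity check) is routine.
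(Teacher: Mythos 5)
Your proof is correct, and it reaches the majorizer \eqref{eq:k} by a genuinely different route than the paper. The paper majorizes $-\log\det(\Lc(\wv)+\Jb)$ in two matrix-level steps borrowed from \cite{zhao_optimization_2019}: first the tangent-type bound $-\log \det( \Lc(\wv)+\Jb) \leq \tr\left(\Fb_0(\Gb \Diag(\tilde{\wv}) \Gb^\top )^{-1} \right)-\log\det(\Fb_0)-n$ with $\Fb_0=\Lc(\wv_0)+\Jb$, then Lemma~4 of \cite{zhao_optimization_2019} to replace the trace of the inverse by the separable reciprocal term $\langle \wv_0^{\circ 2}\oslash\wv,\qv\rangle+\tr(\Fb_0^{-1}\Jb)$; only afterwards is the nonnegative correction $\tau q_i w_{0i}^2\,h(w_i/w_{0i})$ with $h(x)=x+1/x-2$ added, which is exactly how the parameter $\tau$ enters (it does not tighten or loosen the bound at $\wv_0$, it just buys the closed-form multiplicative update \eqref{eq:w_bsum}). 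You instead prove concavity of $\uv\mapsto-\log\det(\Lc(e^{\uv})+\Jb)$ via the weighted matrix-tree identity $\det(\Lc(\wv)+\Jb)=n\sum_T\prod_{e\in T}w_e$, take the tangent in log-weights to get the logarithmic intermediate bound $-\sum_e w_{0e}q_e\log(w_e/w_{0e})$, and then absorb both the reciprocal part and the $\tau$-term in one shot through the elementary scalar inequality $-\log t\le \tau t+t_0(\tau t_0+1)/t-2\tau t_0-1-\log t_0$; the bookkeeping with $\tr(\Fb_0^{-1}\Jb)=1$ and $\langle\wv_0,\qv\rangle=n-1$ is right, and your strict-convexity argument matches the paper's. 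What your route buys is self-containedness (no appeal to the external majorization lemmas) and a cleaner explanation of where each constant in \eqref{eq:k} comes from; what the paper's route buys is that it works directly at the matrix level through the factorization $\Gb\Diag(\tilde\wv)\Gb^\top$ (no spanning-tree combinatorics) and makes transparent that \eqref{eq:k} is the ``natural'' separable majorizer plus a deliberately added nonnegative $\tau$-term. One small caveat, common to both arguments: you assume $\wv_0>0$, while the lemma states $\wv_0\ge\zerov$; entries with $w_{0e}=0$ make the corresponding coefficient in \eqref{eq:k} vanish, so strict convexity (and the tangency argument in your Stage~1) genuinely needs positivity of $\wv_0$ on the relevant coordinates — the paper's proof has the same implicit restriction, so flagging it explicitly as you did is the more careful reading, and it is harmless for the algorithm since the multiplicative update keeps the iterates positive.
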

\label{App:prop1}
\begin{proof}
Taking advantage of the notion of the Laplacian operator \cite{kumar_unified_2020}, we have
\begin{align}
\Lc(\wv) + \Jb  &= \Eb \Diag (\wv)  \Eb^\top+ \Jb =  \Gb \Diag(\tilde{\wv}) \Gb^\top 
\end{align}
where   $\tilde{\wv} = [\wv^\top 1/n]^\top$ and $\Gb = [\Eb, \onev]$. The matrix 
  $\Eb = [\xiv_{1} , \hdots , \xiv_{n(n-1)/2}]\in \Real^{n\times  n(n-1)/2}$,  {\color{black} consists} of vectors $\xiv_{k}$ for $ k=i-j + \frac{j-1}{2}(2n-j), \,\, i>j$, each of which has  a $+1$ at the $j$-th position, a $-1$ at the $i$-th position, and zeros elsewhere. 
Since the $\log \det$ function is concave, an upperbound for  $-\log \det (\Lc(\wv)+\Jb)$ can be constructed via the following inequality \cite{zhao_optimization_2019} 
\begin{align}
\label{eq:fw_major_term1}
-\log \det( \Lc(\wv)+\Jb) \leq &\tr\left(\Fb_0(\Gb \Diag(\tilde{\wv}) \Gb^\top )^{-1} \right)  \\
& -\log \det (\Lc(\wv_0)+\Jb) - n \nonumber 
\end{align}
where $\Fb_0 =\Lc(\wv_0) + \Jb$, and the equality is only achieved  at $\wv=\wv_0$.
Moreover,  using Lemma 4 in  \cite{zhao_optimization_2019}, one can obtain another majorizer as follows:
\begin{align}
\label{eq:fw_major_term2}
&\tr\left(\Fb_0(\Gb \Diag(\tilde{\wv}) \Gb^\top )^{-1} \right)  \\
&\leq  \tr\left( \Fb_0^{-1} \Gb \Diag(\tilde{\wv}_0^{\circ 2} \oslash \tilde{\wv} ) \Gb^\top \right)  \nonumber\\
&=\langle \wv_0^{\circ 2} \oslash \wv, \asto \Lc(\Fb_0^{-1} )\rangle + \tr\left(\Fb_0^{-1} \Jb\right)  \nonumber
\end{align}

Now, let $g_{\wv}(\wv;\wv_0) = \tr(\Lc(\wv) \Kb) + \langle \wv_0^{\circ 2} \oslash \wv, \asto \Lc(\Fb_0^{-1} )\rangle$. Also, define
$\rv = \asto \Lc(\Kb)$ and
$\qv =  \asto \Lc(\Fb_0^{-1}) =  \asto \Lc(\Lc(\wv_0 + \Jb)^{-1})$.
Then,  $g_{\wv}(\wv;\wv_0)$ can be decomposed to scalar functions of $w_i$ as 
\begin{align}
g_{\wv}(\wv;\wv_0) \,= \!\sum_{i=1}^{n(n-1)/2} g_{w_i}(w_i;{w_0}_i) =r_i w_i  + q_i\frac{{w_0}_i^2}{w_i}
\end{align}
with $r_i = [\rv]_i,\quad q_i = [\qv]_i$. Now, let   $h(x) = x + \frac{1}{x} -2 $. It can be easily shown that $h(x)$ is always non-negative for $x>0$, and  only achieves zero  at $x=1$.
In addition, since $\Lc(\wv^{(j)}) + \Jb\succ 0$ and $\Kb\succeq 0$, one can conclude that  $q_i>0$  and $r_i\geq 0$ (using the property of the adjoint Laplacian operator).
Thus, one may suggest a majorizer for $g_{w_i}(w_i; {w_0}_i)$  as
\begin{align*}
g^S_{w_i}(w_i; {w_0}_i) &= g_{w_i}(w_i; {w_0}_i) +  \tau q_i {w_0}_i^2\,\, h(w_i/{w_0}_i) \\
& =r_i w_i+  \tau q_i {w_0}_i^2  \left(\frac{w_i}{{w_0}_i}  +  \frac{{w_0}_i+1/\tau}{w_i}-2\right)
\end{align*}
with $\tau$ being a positive constant.
Finally, after simplifications, one obtains the  majorizer in \eqref{eq:k} for $f_\wv(\wv)$, using  \eqref{eq:fw_major_term1} and \eqref{eq:fw_major_term2}.  
It is easy to verify that $ f_{\wv}^S(\wv; \wv^{(j)})$ is strictly convex for $\wv \geq \zerov$ (via second order derivatives). 
\end{proof}
Using the above lemma, the $\wv$-update step yields  as
\begin{align}
\label{eq:w_bsum}
\wv^{(j+1)} &= \argminn_\wv \, f_{\wv}^S(\wv; \wv^{(j)}) \\
&= \wv^{(j)}  \odot \sqrt{(\tau\wv^{(j)}\odot\qv+\qv)  \oslash   (\tau\wv^{(j)}\odot \qv + \rv) }. \nonumber 
 \nonumber 
\end{align}

\subsection{Main algorithm}
The proposed method with all steps  can {\color{black} be} summarized in Algorithm \ref{algorithm:BSUM_joint_L_X_A}.   We choose $\Xb^{(0)} = \Yb$ and $\wv^{(0)} = \Pc_{\wv\geq \zerov}(\Sb_Y^\dagger)$ for  initialization, where $\Sb_Y = \frac{1}{T} \Yb \Yb^\top$ and  $\Pc_{\wv\geq \zerov}$ denotes the projection onto the set $\wv\geq \zerov$. The stopping criterion is met when the relative error between consecutive iterations becomes smaller than a threshold or when the  number of iterations exceeds a limit.

\begin{algorithm}[t]
\caption{Proposed algorithm to solve problem \eqref{eq:MAP_X_A_w}}
\vspace*{1ex}
\begin{algorithmic}
\State \textbf{Input:}   $\Yb$, $\Mb$. \quad \textbf{Parameters:}   $\alpha$, $\beta$, $\gamma$, and $\tau$. 
\State \textbf{Output:} $\Xb^{(j)}$, $\Lb^{(j)} = \Lc(\wv^{(j)})$.
\State \textbf{Initialization:} $\Xb^{(0)} = \Yb$, $\wv^{(0)} = \Pc_{\wv\geq \zerov}(\Sb_Y^\dagger)$, $j=0$
\REPEAT
\STATE   Obtain $\Xb^{(j+1)}$ using  \eqref{eq:X_update} with $\wv= \wv^{(j)}$.
\STATE  Obtain $\wv^{(j+1)}$ via 
\eqref{eq:w_bsum} 
with $\Xb= \Xb^{(j+1)}$.
\STATE Set $j \leftarrow j+1$
\UNTIL {a stopping condition is met}
\end{algorithmic}
\label{algorithm:BSUM_joint_L_X_A}
\end{algorithm}

\subsection{Computational complexity}
\label{sec:comp_cplx}
 The update step in \eqref{eq:X_update} is $\Oc(n^2T + T^2n  + n^3)$  computationally complex. Furthermore,
given $\Kb$, the  complexity of the $\wv$ update step \eqref{eq:w_bsum}  is controlled by $(\Lc(\wv^{(k)}) + \Jb)^{-1}$, which needs $\Oc(n^3)$ operations. It also costs $\Oc(n^2T+T^2n)$ operations to compute $\Kb$. 
Hence, each iteration of Algorithm \ref{algorithm:BSUM_joint_L_X_A} is $\Oc(n^3+n^2T+T^2n)$ computationally complex.


\section{Simulation Results}
\label{sec: Simulation}
Here, we present the simulation results of our proposed algorithm for graph learning and missing sample recovery on synthetic and real data. 

\subsection{Synthetic data}
For generation of  synthetic data, we consider   $n = 64$ and  $T = 640$. We use the Stochastic Block Model for the underlying graph
comprising of 4  clusters (blocks),  with   inter-cluster  and intra-cluster edge probabilities of 0.7 and 0.075, respectively.  
The  Laplacian matrix of the graph is then scaled
to  have  $\tr(\Lb^\ast) = n$. We then   generate random samples of the  signal via 
$
\xv^\ast_t  =   \sqrt{{\Lb^\ast}^\dagger} \nuv_t, \quad   \nuv_t\sim \Nc(\zerov, \Ib)
$.
The    original  data matrix $\Xb^\ast$ is then constructed as  $\Xb^\ast = [\xv^\ast_1, \hdots,\xv^\ast_T]$. 
Next, we normalize $\Xb^\ast$, so that each row has zero mean   and unit standard deviation.
 Finally, 
 the observations are obtained as $\Yb = \Mb \odot \Xb^\ast $, where $\Mb$ is the binary (sampling) mask matrix. 
Now, we provide the matrices $\Yb$ and $\Mb$ as inputs, for signal and  graph inference. The hyper-parameters of our method are chosen as $\alpha = 0.02$, $\beta = 0.02T$, $\gamma = 0.002T$, and $\tau=100$.
To measure the performance of the   graph learning algorithms, we use the relative error (RelErr) and the F-score criteria.
Let $\Lb^\ast\in \Real^{n\times n}$ be the ground-truth Laplacian, and $\widehat{\Lb}\in \Real^{n\times n}$   be the estimated one,
the relative error and the F-score values are defined as
\begin{align}
\text{RelErr} =  \dfrac{\norm \asto{\Lb} - \widehat{\Lb} \norm_F}{\norm \asto{\Lb}\norm_F},
\qquad
\text{F-score} = \dfrac{2 \text{TP} }{2\text{TP} + \text{FP} + \text{FN}}.  \nonumber
\end{align} 
The terms TP, FP, and FN, respectively denote the true positive, false positive, and false negative connections in the inferred graph.
The signal recovery   performance is also measured via SNR and the normalized mean squared error (NMSE) criteria, defined as
\begin{align}
\text{SNR}  = 20 \log_{10}\! \left(\tfrac{\norm \asto{\Xb}\norm_F} {\norm \asto{\Xb} - \widehat{\Xb} \norm_F} \right)\!, 
\quad
\text{NMSE}  =  \frac{1}{T}\sum_{i=1}^T \dfrac{\norm \asto{\xv}_i  - \hat{\xv}_i\norm^2}{\norm  \asto{\xv}_i\norm^2}
\nonumber
\end{align} 
where   $\asto{\Xb}$ and $\widehat{\Xb}$, denote the ground-truth and the estimated data matrices,  with $\asto{\xv}_i$ and  $\hat{\xv}_i$ being their $i$-th columns, respectively.

\subsubsection{Graph learning}\hfill

Here, we evaluate the performance of the proposed method for learning the Laplacian matrix from incomplete data. We compare 
the  results of our method 
with several benchmark algorithms for undirected graph learning. These  include the CGL \cite{egilmez_graph_2017}, 
the GSP toolbox graph learning methods \cite{kalofolias_how_2016}, namely the GSPBOX-Log and the GSPBOX-L2,
the GL-SigRep method in \cite{dong_learning_2016} for both graph learning and signal recovery assuming smooth signal representation,
and the 
nonconvex graph learning algorithm  in \cite{ying_nonconvex_2020} called NGL\footnote{\href{https://github.com/mirca/sparseGraph}{https://github.com/mirca/sparseGraph}}.
For fair comparison, the estimated Laplacian matrix is scaled  such that $\tr(\widehat{\Lb}) = n$. 

Figure \ref{fig_L_sr}  shows the estimation results  in terms of RelErr and F-score versus different values of the sampling rate (SR).   
As it is implied from the figure, the proposed algorithm  has superior performance in estimating the graph Laplacian matrix, especially at higher sampling rates.

\begin{figure}[t]
\centering
\hspace*{-4cm}
\begin{subfigure}{0.48\columnwidth}
   \includegraphics[trim=150 260 150 260,clip, width=\textwidth]{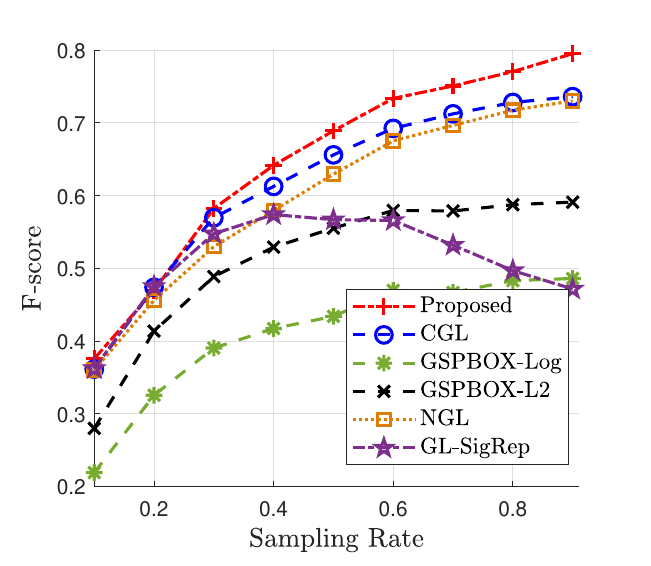}
\end{subfigure}
\begin{subfigure}{0.48\columnwidth}
    \includegraphics[trim=150 260 150 260,clip, width=\textwidth]{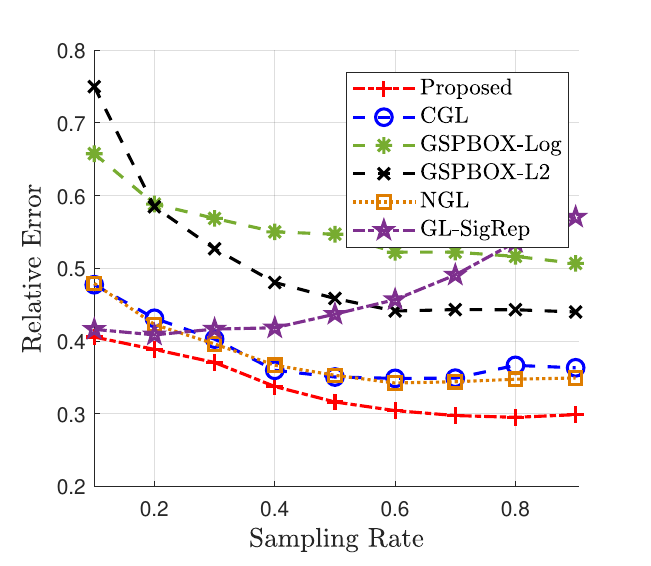}
\end{subfigure}
\hspace*{-4cm}
\caption{Performance results of the Laplacian matrix estimation from synthetic data (in terms of F-score and relative error) versus the sampling rate ($\mathrm{SR}$).}
\label{fig_L_sr}
\end{figure}

\subsubsection{Data matrix recovery}\label{sim_synth_X}\hfill

In this part, we evaluate the performance of the proposed method for recovery of the original signal  $\Xb^\ast$.
We  compare our method  with several benchmark algorithms, 
some of which are also based on graphical modeling. These  include the
SOFT-IMPUTE algorithm\footnote{\href{https://cran.r-project.org/web/packages/softImpute/index.html}{https://cran.r-project.org/web/packages/softImpute/index.html}} for matrix completion using nuclear norm regularization \cite{mazumder_spectral_2010},
the GL-SigRep method \cite{dong_learning_2016},
the method for joint inference of signals and graphs (JISG) \cite{ioannidis_semi-blind_2019}, 
the time-varying graph signal reconstruction method (TVGS) \cite{qiu_time-varying_2017} and the method in \cite{perraudin_towards_2016} named as Graph-Tikhonov.
For the last two methods, the underlying graph Laplacian matrix must be given a priori. For this, we use the CGL algorithm to infer the Laplacian matrix from the incomplete observations $\Yb$.
Figure \ref{fig_X_syn_sr} depicts the performance of 
 the proposed algorithm compared to the benchmark methods for recovery of the data matrix, 
at different values of the sampling
 rate. The proposed method is shown to outperform the other methods for recovery of the missing data.

\begin{figure}[t]
\centering \hspace*{-4cm}
\begin{subfigure}{0.48\columnwidth}
   \includegraphics[trim=150 260 150 260,clip, width=\textwidth]{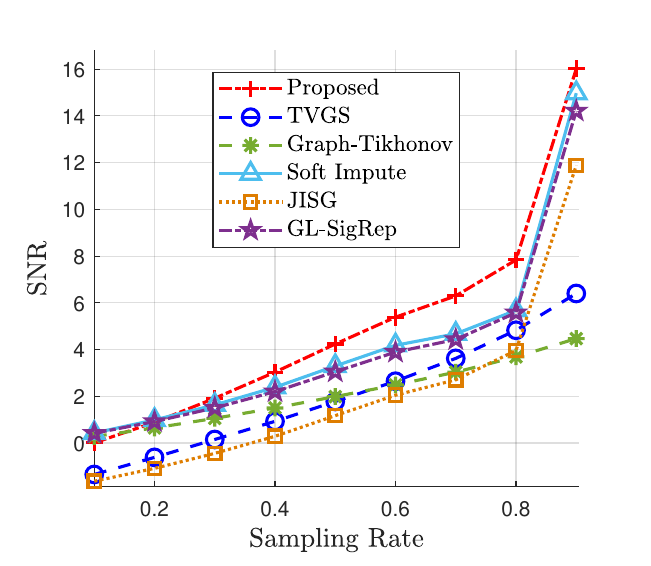}
\end{subfigure} 
\begin{subfigure}{0.48\columnwidth}
    \includegraphics[trim=150 260 150 260,clip, width=\textwidth]{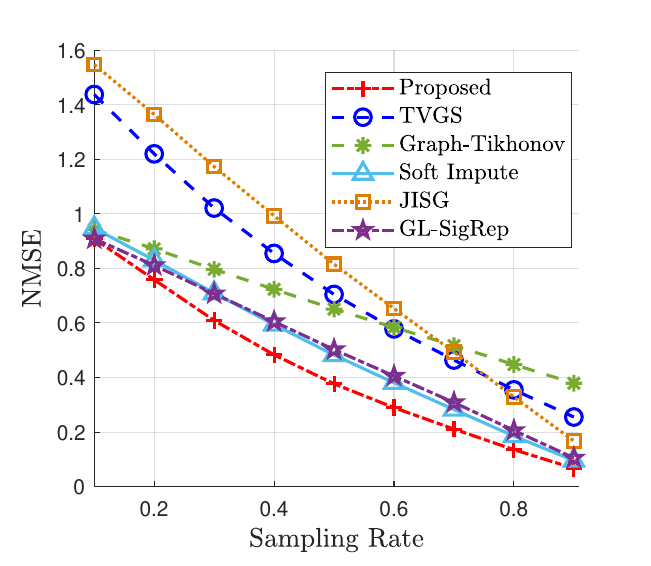}
\end{subfigure} 
\hspace*{-4cm}
\caption{Performance results of the original synthetic data matrix reconstruction (in terms of SNR and NMSE) versus the sampling rate.}
\label{fig_X_syn_sr}
\end{figure}

\subsection{Real data}
This part provides  numerical results on real  data. 
For this purpose, we use 
the data corresponding to the PM2.5 concentration in the air of the state of California. 
The  data matrix  of  dimension 93x300,  contains concentration values of 
PM2.5 particulate matters,  measured at 93 stations (in California) for 300 days starting from January 1, 2015. %
We then normalize the data as explained in the previous part
and construct
$\Yb = \Mb \odot\Xb^\ast $ (with random $\Mb$). 
Here, we only provide the results of graph signal recovery (since there is no ground-truth graph model).
Figure
\ref{fig_X_real3_sr} 
is an example to show that our proposed method has better performance  compared to the benchmark algorithms, in
recovery of real-world signals.

\section{Conclusion}
In this paper, we examined the problem of learning a graph from  incomplete data, which can also be considered as semi-blind recovery of missing samples of a time-varying graph signal. 
We proposed an algorithm to jointly  estimate the underlying graph model and the signal based on
the block successive upperbound minimization method. 
We further 
analyzed the computational complexity of our proposed method in Section \ref{sec:comp_cplx}.
The results of simulations on  synthetic and real data provided in Section \ref{sec: Simulation}, also demonstrate the efficiency of
the proposed method for both signal recovery  and graph inference from incomplete time-series observations.


\begin{figure}[!]
\centering \hspace*{-4cm}
\begin{subfigure}{0.48\columnwidth}
   \includegraphics[trim=150 260 150 260,clip, width=\textwidth]{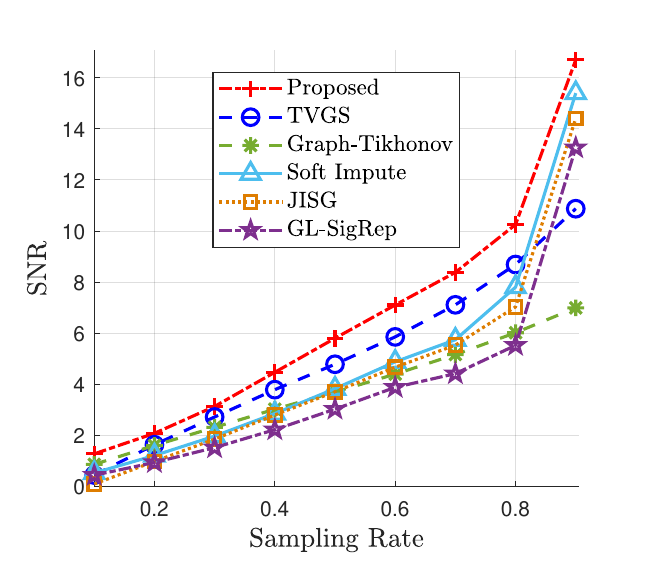}
\end{subfigure} 
\begin{subfigure}{0.48\columnwidth}
    \includegraphics[trim=150 260 150 260,clip, width=\textwidth]{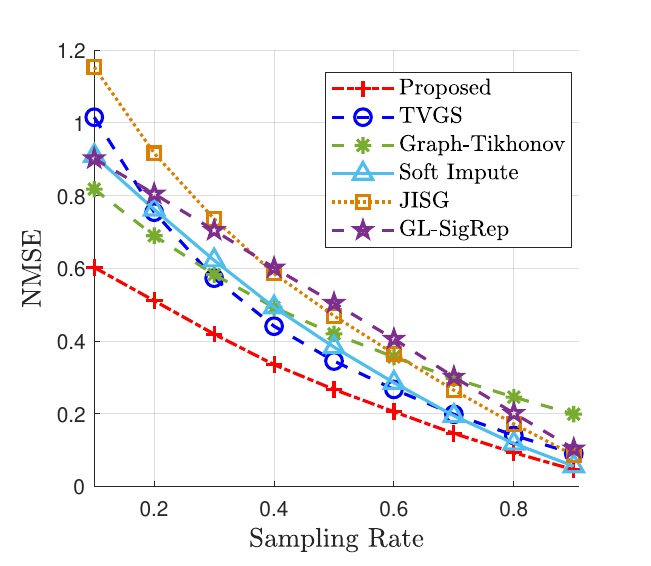}
\end{subfigure} 
\hspace*{-4cm}
\caption{Performance results of the recovery algorithms (in terms of SNR and NMSE) for reconstruction of PM2.5 data matrix from incomplete measurements,  at different  sampling rates.}
\label{fig_X_real3_sr}
\end{figure}

\bibliography{My_Library
}
\bibliographystyle{IEEEbib}


\end{document}